\journal{Elsevier journal}
\def\R{\mathbb{R}}
\def\E{\mathbb{E}}
\def\P{\mathbb{P}}
\def\y{{\boldsymbol{y}}}
\def\x{{\boldsymbol{x}}}
\def\b{{\boldsymbol{b}}}
\def\0{{\bf{0}}}
\def\1{{\bf{1}}}
\def\X{{\bf{X}}}
\def\I{{\bf{I}}}
\def\H{{\bf{H}}}
\def\vmu{{\boldsymbol{\mu}}}
\def\vvarepsilon{{\boldsymbol{\varepsilon}}}
\def\evmu{\widehat{\boldsymbol{\mu}}}
\def\tvmu{\widetilde{\boldsymbol{\mu}}}
\def\emu{\widehat{\mu}}
\def\ebv{\widehat{\boldsymbol{b}}}
\def\evbeta{\widehat{\boldsymbol{\beta}}}
\def\tvbeta{\widetilde{\boldsymbol{\beta}}}
\def\vbeta{\boldsymbol{\beta}}
\def\ebeta{\widehat{\beta}}
\def\eb{\widehat{b}}
\def\ed{\widehat{d}}
\def\alphaopt{\alpha_{\rm opt}}
\def\esigma{\widehat{\sigma}}
\def\ek{\widehat{k}}
\def\ealpha{\widehat{\alpha}}
\def\tr{{\rm trace}}
\def\eB{\widehat{B}}
\def\eR{\widehat{R}}
\def\DF{{\rm DF}}
\def\eXv{\widehat{\bf X}}
\def\eS{\widehat{S}}
\def\eSv{\widehat{\bf S}}
\def\eHv{\widehat{\bf H}}
\def\ePv{\widehat{\bf P}}
\def\eqv{\widehat{\boldsymbol{q}}}
\newtheorem{theorem}{Theorem}
\newtheorem{lemma}{Lemma}
\newtheorem{property}{Property}
\begin{document}

\begin{frontmatter}



\title{On an improvement of LASSO by scaling}


\author{Katsuyuki Hagiwara}

\ead{hagi@edu.mie-u.ac.jp}
\address{Faculty of Education, Mie University,\\ 1577 Kurima-Machiya-cho, Tsu, 514-8507, Japan}

\begin{abstract}

A sparse modeling is a major topic in machine learning and
statistics. LASSO (Least Absolute Shrinkage and Selection Operator) is a
popular sparse modeling method while it has been known to yield
unexpected large bias especially at a sparse representation. There have
been several studies for improving this problem such as the introduction
of non-convex regularization terms. The important point is that this
bias problem directly affects model selection in applications since a
sparse representation cannot be selected by a prediction error based
model selection even if it is a good representation. In this article, we
considered to improve this problem by introducing a scaling that expands
LASSO estimator to compensate excessive shrinkage, thus a large bias in
LASSO estimator. We here gave an empirical value for the amount of
scaling. There are two advantages of this scaling method as follows. Since
the proposed scaling value is calculated by using LASSO estimator, we only
need LASSO estimator that is obtained by a fast and stable optimization
procedure such as LARS (Least Angle Regression) under LASSO modification
or coordinate descent.  And, the simplicity of our scaling method
enables us to derive SURE (Stein's Unbiased Risk Estimate) under the
modified LASSO estimator with scaling.  Our scaling method together with model
selection based on SURE is fully empirical and do not need additional
hyper-parameters. In a simple numerical example, we verified that our
scaling method actually improves LASSO and the SURE based model
selection criterion can stably choose an appropriate sparse model.
\end{abstract}

\begin{keyword}



sparse modeling, LASSO, scaling, SURE
\end{keyword}

\end{frontmatter}



\section{Introduction}

A sparse modeling is a major topic in machine learning and
statistics. Especially, LASSO (Least Absolute Shrinkage and Selection
Operator) is a popular method that has been extensively
studied\cite{LARS,KF2000,FL2001,LASSO,HZ2006,NM2007,ZHT2007}. LASSO is
an $\ell_1$ penalized least squares method and has a nature of
soft-thresholding that implements thresholding and shrinkage of
coefficients; see \cite{DJ1994,DJ1995}. These two properties are
simultaneously controlled by a single regularization parameter. This
causes an excessive shrinkage, thus, a large bias that is directly
related to a consistency of model selection by LASSO. This has been
pointed out by \cite{LLW2006,FL2001,NM2007} and it has been proposed
several methods for solving this
problem\cite{FL2001,HZ2006,NM2007,MCP}. \cite{HZ2006} has proposed
adaptive LASSO that employs a weighted $\ell_1$ penalty, by which small
penalty is assigned to a large coefficient values.  \cite{NM2007} has
proposed relaxed LASSO to solve a limitation of one parameter control for
thresholding and shrinkage by introducing an additional parameter. On
the other hand, \cite{FL2001} has proposed SCAD (Smoothly Clipped
Absolute Deviation) that employs a non-convex penalty instead of
$\ell_1$ penalty. \cite{MCP} has also introduced a different type of
non-convex penalty called MCP(minimax concave penalty). The introduction
of non-convex penalty has an effect to suppress a bias at large values
of estimators. Since the methods with non-convex penalty have a
difficulty in optimization, the solutions to them have been
investigated; e.g. \cite{ZZ2012,LW2015}. \cite{ZZ2012} has shown that a
gradient descent started from a LASSO solution yields a local minimum of
a objective function with a non-convex penalty and it can be a good
solution for the objective function.  In \cite{LW2015}, local minima in
a non-convex penalty method including SCAD and MCP have good quality for
true values of coefficients.

In this article, we focus on a model selection problem in applications
of a sparse modeling. \cite{ZY2006} has shown that LASSO has a
consistent model selection property under a certain condition that is,
however, known to be somewhat restrictive. On the other hand, under
milder conditions than in LASSO, adaptive LASSO, SCAD and MCP have the
oracle property that consists of consistency of model selection and
asymptotic normality of the estimators of non-zero
coefficients\cite{FL2001,MCP}. All these results are based on an
appropriate setting of the regularization parameter. Therefore, it does
not tell us a choice of the regularization parameter in
application. Usually, it relies on the cross validation; e.g. it is
commonly implemented in many software packages. We emphasize that a bias problem
in LASSO directly affects a choice of model (regularization parameter)
under the cross validation. Since a bias is high at a sparse
representation in LASSO, a good sparse representation may not be
selected by a prediction error based criterion such as cross validation
error; see \cite{LLW2006}. We need to take into account of this point
rather than improvement of estimators. This model selection problem is
relaxed in cross validation for adaptive LASSO, SCAD and MCP since a
bias problem of LASSO is improved in these methods.  However, despite of
a good quality of SCAD and MCP estimators as in \cite{ZZ2012,LW2015},
local minima and optimization problem may yield a fluctuation of
estimators among the training sets in cross validation. The impact of
this fluctuation on validation error may not be well
evaluated. Especially, since these methods need an another
hyper-parameter for specifying the shape of penalty term, we need to
conduct cross validation for grid search on two hyper-parameters.

In this article, we consider to introduce a scaling of LASSO estimator;
i.e. scalar times of LASSO estimator.  We here give an appropriate
empirical scaling value which actually improves the excessive shrinkage,
thus a large bias in LASSO. The empirical scaling value has a simple
form with LASSO estimator; i.e. LASSO estimator is plugged in to the
scaling value. Therefore, in our method, we just need LASSO estimator
that can be obtained by a fast and stable method such as LARS (Least
Angle Regression)\cite{LARS} under LASSO modification or coordinate
descent\cite{WL2008,FHHT2007}. This is a benefit of our method in
comparing with the other methods including non-convex methods. Moreover,
a simplicity of our scaling method enables us to derive its analytic
model selection criterion that is $C_p$-type criterion based on SURE
(Stein's Unbiased Risk Estimate). For a naive LASSO, SURE has already
been derived in \cite{ZHT2007}. Actually, we apply this result to derive
SURE for the LASSO with scaling.  However, it is not available for
adaptive LASSO, relaxed LASSO and SCAD. Although it is derived for MCP
under a specific condition, its effectiveness in applications is not
clear; e.g. many software packages that implement MCP employed cross
validation. On the other had, our scaling method is closely related to
adaptive LASSO and relaxed LASSO. Adaptive LASSO controls biases
componentwisely by coefficientwise weights in $\ell_1$ regularizer. The
weights are calculated based on the initial estimator such as the least
squares estimators. Note that we may need ridge estimators as the
initial estimator for stable training in applications. The cost function
including the weighted $\ell_1$ regularizer can be simply optimized by a
modified LARS-LASSO\cite{HZ2006}. On the other hand, in relaxed LASSO,
shrinkage and thresholding parameters are introduced differently and
those are simultaneously optimized by an algorithm based on LARS-LASSO.
Relaxed LASSO can be viewed as controlling bias independently of
threshold. In this point of view, in our scaling method, threshold is
achieved by LASSO and amount of shrinkage is controlled by scaling
value.  Although there have been derived some important asymptotic
results for adaptive LASSO and relaxed LASSO, it may be difficult to
derive an analytic solution to model selection.  On the other hand, as
an improvement of adaptive LASSO, multi-step adaptive LASSO has been
proposed in \cite{BM2008}; see also \cite{XX2015}. Multi-step adaptive
LASSO employ adaptive LASSO at each cycle, in which LASSO estimators are
employed as initial estimators in weights. Multi-step adaptive LASSO is
similar to our scaling method since both methods employ LASSO estimator
in the parameters for improving a bias problem of LASSO.
Unfortunately, the method of model selection has not been discussed for
multi-step adaptive LASSO. In conclusion, we can say that
possibility of deriving SURE is an another benefit of our scaling
method.

In section 2, we give a regression framework including LASSO and a
definition of risk with its Stein's formula. In section 3, we introduce
a scaling of LASSO estimator. Especially, we give a reasonable empirical
scaling value and derive a model selection criterion under the given
scaling value. In section 4, we verify our results in section
3 through a simple numerical experiment. It includes comparisons to the
other modeling method such as MCP and adaptive LASSO. Section 5 is
devoted for conclusions and future works.

\section{LASSO with scaling}

\subsection{Regression problem and LASSO}

Let $\x=(x_1,\ldots,x_m)$ and $y$ be explanatory variables and a
response variable, for which we have $n$ samples :
$\{(x_{i,1},\ldots,x_{i,m},y_i):i=1,\ldots,n\}$.  We define
$\x_j=(x_{1,j},\ldots,x_{n,j})'\in\R^n$ for $j=1,\ldots,m$, where $'$
stands for the transpose operator.  We define $\X=(\x_1,\ldots,\x_m)$
and $\y=(y_1,\ldots,y_n)'$.  In this article, we assume that $m\le n$
holds and $\x_1,\ldots,\x_m$ are linearly independent.  Therefore,
$\X'\X$ is not singular here. Let $\varepsilon_1,\ldots,\varepsilon_n$
be i.i.d. samples from $N(0,\sigma^2)$; i.e. normal distribution with
mean $0$ and variance $\sigma^2$. Thus, by defining
$\vvarepsilon=(\varepsilon_1,\ldots,\varepsilon_n)'$, $\vvarepsilon\sim
N(\0_n,\sigma^2\I_n)$, where $\0_n$ is an $n$-dimensional zero vector
and $\I_n$ is an $n\times n$ identity matrix. We assume
$\y=\vmu+\vvarepsilon$.  We therefore have $\vmu=\E\y$, where $\E$ is
the expectation with respect to the joint probability distribution of
$\y$. We consider a regression problem by $\X\b$, where
$\b=(b_1,\ldots,\b_m)$ is a coefficient vector.  Let
$\ebv=(\eb_1,\ldots,\eb_m)$ be an estimator of $\b$.  LASSO is a method
for obtaining coefficient estimators that minimize $\ell_1$ regularized
cost function defined by
\begin{equation}
\label{eq:cost-lasso}
C_{\lambda}(\b)=\|\y-\X\b\|^2+\lambda\|\b\|_1,
\end{equation}
where $\|\cdot\|$ is the Euclidean norm and
$\|\b\|_1=\sum_{k=1}^n|b_j|$. $\lambda\ge 0$ is a regularization
parameter.  The second term of the right hand side of
(\ref{eq:cost-lasso}) is called $\ell_1$ regularizer.  Let
$\ebv_{\lambda}=(\eb_{1,\lambda},\ldots,\eb_{m,\lambda})$ be a LASSO
solution. Since the LASSO is known to be yield a sparse representation
under an appropriate choice of $\lambda$, some of elements in
$\ebv_{\lambda}$ are exactly zeros. 
We denote a LASSO output vector by $\evmu_{\lambda}=(\emu_{\lambda,1},\ldots,\emu_{\lambda,n})'$
that is given by $\evmu_{\lambda}=\X\ebv_{\lambda}$.
We define $\eB_{\lambda}=\{i:\eb_{i,\lambda}\neq 0\}$ and
$\ek_{\lambda}=|\eB_{\lambda}|$. $\eB_{\lambda}$ is called an active
set. There are regularization parameter values at which the active set
changes. We denote those by $\lambda_0>\cdots>\lambda_J=0$, in which
$\ebv_{\lambda}=\0_m$ for $\lambda>\lambda_0$ under a given $\y$.
$\lambda_j$ is called a transition point.  

Let $\X_{\eB_{\lambda}}$ be an $n\times\ek_{\lambda}$ matrix whose
column vectors are $\x_j$, $j\in\eB_{\lambda}$. We write
$\eXv_{\lambda}=\X_{\eB_{\lambda}}$ for simplicity.  Also we define
$\evbeta$ as a $\ek$-dimensional vector whose elements are
$\{\eb_k:k\in\eB_{\lambda}\}$.  We write
$\evbeta_{\lambda}=(\ebeta_{1,\lambda},\ldots,\ebeta_{\ek,\lambda})'$;
i.e. $\ebeta_k$ is a member of $\{\eb_k:k\in\eB_{\lambda}\}$ under an
appropriate enumeration.  Under this definition, we have
$\evmu_{\lambda}=\eXv_{\lambda}\evbeta_{\lambda}$ since
$\eb_{k,\lambda}=0$ for $k\notin\eB_{\lambda}$.  Let
$\eSv_{\lambda}=(\eS_{1,\lambda},\ldots,\eS_{\ek,\lambda})'$ be a sign
vector of $\evbeta_{\lambda}$; i.e.
\begin{equation}
\eS_{k,\lambda}=\begin{cases}
       1 & \ebeta_{k,\lambda}>0\\
       0 & \ebeta_{k,\lambda}=0\\
       -1 & \ebeta_{k,\lambda}<0\\
      \end{cases}. 
\end{equation}

\subsection{Some facts on LASSO estimate}

By Lemma 1 in \cite{ZHT2007}, the LASSO estimator satisfies that 
\begin{equation}
\label{eq:lemma1-ZHT2007}
\evbeta_{\lambda}=(\eXv_{\lambda}'\eXv_{\lambda})^{-1}
\left(\eXv_{\lambda}'\y-\lambda\eSv_{\lambda}\right)
\end{equation}
if $\lambda$ is not a transition point. Therefore, we have
\begin{equation}
\label{eq:evmu-lemma1-ZHT2007}
\evmu_{\lambda}=\eHv_{\lambda}\y-\lambda\eqv_{\lambda},
\end{equation}
where $\eHv_{\lambda}=\eXv_{\lambda}(\eXv_{\lambda}'\eXv_{\lambda})^{-1}\eXv_{\lambda}'$ and
 $\eqv_{\lambda}=\eXv_{\lambda}(\eXv_{\lambda}'\eXv_{\lambda})^{-1}\eSv_{\lambda}$.
It is easy to check that $\eHv_{\lambda}$ is an idempotent matrix.

We define
$\tvbeta_{\lambda}=(\eXv_{\lambda}'\eXv_{\lambda})^{-1}\eXv_{\lambda}'\y$.
This is the least squared estimator under $\eXv_{\lambda}$, thus in a
post estimation. Note that this is not a linear estimator of $\y$ since
$\eXv_{\lambda}$ is already chosen according to $\y$.  We also define
$\tvmu_{\lambda}=\eXv_{\lambda}\tvbeta_{\lambda}$. Obviously, this can
be written as $\tvmu_{\lambda}=\eHv_{\lambda}\y$. Therefore,
(\ref{eq:evmu-lemma1-ZHT2007}) can be written as
\begin{equation}
\label{eq:evmu-lemma1-ZHT2007-2}
\evmu_{\lambda}=\tvmu_{\lambda}-\lambda\eqv_{\lambda}
\end{equation}
for a non-transition $\lambda$.
We summarize some facts that are derived by 
(\ref{eq:evmu-lemma1-ZHT2007}) and are used in this article.

\begin{lemma}
\label{lemma:several-equations}
If $\lambda$ is not a transition point, the following equations hold.
\begin{align}
\label{eq:evmu-eqv}
\evmu_{\lambda}'\eqv_{\lambda}&=\|\evbeta_{\lambda}\|_1\\
\label{eq:evmu2-evmuy}
\|\evmu_{\lambda}\|^2&=\evmu_{\lambda}'\y-\lambda\eqv_{\lambda}'\y
+\lambda^2\|\eqv_{\lambda}\|^2\\
\label{eq:evmu2-evmuy=l1}
\|\evmu_{\lambda}\|^2&=\evmu_{\lambda}'\y-\lambda\|\evbeta_{\lambda}\|_1\\
\label{eq:H-evmu}
\H_{\lambda}\evmu_{\lambda}&=\evmu_{\lambda}.
\end{align}
\end{lemma}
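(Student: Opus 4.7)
The plan is to reduce each of the four identities to (\ref{eq:lemma1-ZHT2007})--(\ref{eq:evmu-lemma1-ZHT2007}) by direct algebraic manipulation, exploiting two structural facts: (i) $\eHv_{\lambda}$ is the orthogonal projector onto the column space of $\eXv_{\lambda}$, hence symmetric and idempotent; and (ii) on the active set every $\ebeta_{k,\lambda}$ is nonzero and $\eS_{k,\lambda}$ is by construction its sign. I would prove the identities in the order (\ref{eq:H-evmu}), (\ref{eq:evmu-eqv}), (\ref{eq:evmu2-evmuy=l1}), (\ref{eq:evmu2-evmuy}), since each step reuses the previous ones.

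First, for (\ref{eq:H-evmu}), since $\evmu_{\lambda}=\eXv_{\lambda}\evbeta_{\lambda}$ lies in the column space of $\eXv_{\lambda}$, and $\eHv_{\lambda}$ is the projector onto that subspace, $\eHv_{\lambda}\evmu_{\lambda}=\evmu_{\lambda}$ follows immediately by expanding the definitions and cancelling $\eXv_{\lambda}'\eXv_{\lambda}$ against its inverse.

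Next, for (\ref{eq:evmu-eqv}), I would write
$\evmu_{\lambda}'\eqv_{\lambda}
=\evbeta_{\lambda}'\eXv_{\lambda}'\eXv_{\lambda}(\eXv_{\lambda}'\eXv_{\lambda})^{-1}\eSv_{\lambda}
=\evbeta_{\lambda}'\eSv_{\lambda}$,
and observe that because each $\ebeta_{k,\lambda}$ is nonzero (it is in the active set) with $\eS_{k,\lambda}=\mathrm{sgn}(\ebeta_{k,\lambda})$, the sum $\sum_k\ebeta_{k,\lambda}\eS_{k,\lambda}$ collapses to $\sum_k|\ebeta_{k,\lambda}|=\|\evbeta_{\lambda}\|_1$.

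For (\ref{eq:evmu2-evmuy=l1}), I would expand
$\|\evmu_{\lambda}\|^2=\evmu_{\lambda}'(\eHv_{\lambda}\y-\lambda\eqv_{\lambda})$
using (\ref{eq:evmu-lemma1-ZHT2007}), then apply (\ref{eq:H-evmu}) (after pulling $\eHv_{\lambda}$ onto $\evmu_{\lambda}$ using symmetry) to get $\evmu_{\lambda}'\eHv_{\lambda}\y=\evmu_{\lambda}'\y$, and finally apply (\ref{eq:evmu-eqv}) to the remaining cross term. Finally, for (\ref{eq:evmu2-evmuy}), I would expand $\|\eHv_{\lambda}\y-\lambda\eqv_{\lambda}\|^2$, use $\eHv_{\lambda}\eqv_{\lambda}=\eqv_{\lambda}$ (which follows from the same projector identity applied to a vector already in the column space of $\eXv_{\lambda}$) to reduce the cross term to $-2\lambda\eqv_{\lambda}'\y$, then eliminate the leftover $\y'\eHv_{\lambda}\y$ by noting $\evmu_{\lambda}'\y=\y'\eHv_{\lambda}\y-\lambda\eqv_{\lambda}'\y$.

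There is no genuine obstacle here; the work is essentially bookkeeping. The only point requiring care is the sign identity $\evbeta_{\lambda}'\eSv_{\lambda}=\|\evbeta_{\lambda}\|_1$, which uses that $\lambda$ is not a transition point so the active set is unambiguous and every active coefficient has a well-defined matching sign.
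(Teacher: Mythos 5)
Your proposal is correct and follows essentially the same route as the paper: both reduce everything to the closed form $\evmu_{\lambda}=\eHv_{\lambda}\y-\lambda\eqv_{\lambda}$, the projector identities for $\eHv_{\lambda}$, and the sign identity $\evbeta_{\lambda}'\eSv_{\lambda}=\|\evbeta_{\lambda}\|_1$. The only difference is cosmetic ordering --- the paper derives (\ref{eq:evmu2-evmuy}) first via the decomposition $\y-\evmu_{\lambda}=(\I_n-\eHv_{\lambda})\y+\lambda\eqv_{\lambda}$ and then collapses the extra terms to $\lambda\|\evbeta_{\lambda}\|_1$, whereas you prove (\ref{eq:evmu2-evmuy=l1}) directly and handle (\ref{eq:evmu2-evmuy}) separately --- but the underlying computations are the same.
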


\begin{proof}
In this proof, we drop $\lambda$ from symbols for simplifying the
description of terms.  By (\ref{eq:evmu-lemma1-ZHT2007}), we have
\begin{align}
\evmu'\eqv&=\eSv'(\eXv'\eXv)^{-1}\eXv'\eXv(\eXv'\eXv)^{-1}(\eXv'\y-\lambda\eSv)
=\eSv'\evbeta.
\end{align}
We then obtain (\ref{eq:evmu-eqv}) by the definition of $\eSv$.

We define $\ePv=\I_n-\eHv$. By the definition of $\eHv$ and $\ePv$, we have
\begin{equation}
\label{eq:lemma-qHy=qy}
\eqv'\eHv\y=\eqv'\y
\end{equation}
and, thus,
\begin{equation}
\eqv'\ePv\y=\eqv'\y-\eqv'\eHv\y=0.
\end{equation}
Since $\eHv$ is an idempotent matrix, we have $\eHv\ePv=O_n$, where
$O_n$ is an $n\times n$ zero matrix.  Thus, by
(\ref{eq:lemma1-ZHT2007}), (\ref{eq:evmu2-evmuy}) is obtained as
\begin{align}
 \evmu'\y-\|\evmu\|^2
 &=\evmu'(\y-\evmu)\notag\\
&=(\eHv\y-\lambda\eqv)'(\ePv\y+\lambda\eqv)\notag\\
&=\lambda\eqv'\y-\lambda^2\|\eqv\|^2.
\end{align}
Moreover, by (\ref{eq:lemma-qHy=qy}), 
(\ref{eq:evmu-lemma1-ZHT2007}) and (\ref{eq:evmu-eqv}), we have
\begin{align}
\lambda\eqv'\y-\lambda^2\|\eqv\|^2
&=\lambda\eqv'\eHv\y-\lambda^2\|\eqv\|^2\notag\\
&=\lambda\eqv'(\eHv\y-\lambda\eqv)\notag\\
&=\lambda\eqv'\evmu\notag\\
&=\lambda\|\evbeta\|_1.
\end{align}

Finally, by the definition of $\eHv$ and $\evmu$, we obtain
\begin{align}
\eHv\evmu=\eXv(\eXv'\eXv)^{-1}\eXv'\eXv(\eXv'\eXv)^{-1}(\eXv'\y-\lambda\eSv)=\evmu.
\end{align}
\end{proof}

\subsection{Definition of risk and its Stein's formula}

Let $\evmu=(\emu_1,\ldots,\emu_n)'\in\R^n$ be a regression estimate of
$\vmu=\E\left[\y\right]$. A prediction capability of $\evmu$ is
measured by a risk :
\begin{equation}
R_n=\frac{1}{n}\E\left[\|\evmu-\vmu\|^2\right],
\end{equation}
where $\E$ is the expectation with respect to the joint probability
distribution of $\y$. It is easily verified that
\begin{align}
\label{eq:general-risk}
R_n&=\frac{1}{n}\E\left[\|\evmu-\y\|^2\right]-\sigma^2+\DF_n,
\end{align}
where
\begin{equation}
\label{eq:general-df}
\DF_n=\frac{2}{n}\E\left[(\evmu-\E\left[\evmu\right])'(\y-\vmu)\right]
\end{equation}
that is a covariance between $\evmu$ and $\y$.
$\DF_n$ is often called the degree of freedom.

Let $\partial\evmu/\partial\y$ be an $n\times n$ matrix whose
 $(i,j)$ entry is $\partial\emu_i/\partial y_j$. 
We define
\begin{align}
\nabla\cdot\evmu=\tr\frac{\partial\evmu}{\partial\y}=
\sum_{i=1}^n\frac{\partial\emu_i}{\partial y_i}
\end{align} 
in which $\tr$ denotes the trace of a matrix. In \cite{CS1981}, it has
been shown that
\begin{align}
\label{eq:general-df-stein}
\DF_n&=\frac{2\sigma^2}{n}\E\left[\nabla\cdot\evmu\right]
\end{align} 
holds if $\emu_i=\emu_i(\y):\R^n\mapsto\R$, $i=1,\ldots,n$ are almost
differentiable in the term of \cite{CS1981} and the expectation in the
right hand side exists. $\nabla\cdot\evmu$ is called a divergence of
$\evmu$. By this result,
\begin{equation}
\eR_n(\sigma^2)=-\sigma^2
\frac{1}{n}\|\evmu-\y\|^2+\frac{2\sigma^2}{n}\nabla\cdot\evmu
\end{equation}
is an unbiased estimator of a risk $R_n$.  $\eR_n(\sigma^2)$ is
called SURE (Stein's Unbiased Risk Estimate).  We can then construct a
$C_p$-type model selection criterion by replacing $\sigma^2$ with an
appropriate estimate $\esigma^2$; e.g. \cite{ZHT2007}.

\section{LASSO with scaling}

\subsection{An optimal scaling}

We now consider to assign a positive single scaling parameter to LASSO
estimator. More precisely, the scaling parameter is denoted by
$\alpha>0$ and the modified LASSO estimator with scaling is given by
$\alpha\vbeta_{\lambda}$, where $\vbeta_{\lambda}$ is a vector of
non-zero elements of LASSO estimator.  The output vector with a single
scaling parameter is given by
$\evmu_{\lambda,\alpha}=\alpha\evmu_{\lambda}$.  Thus,
$\evmu_{\lambda,1}$ is a LASSO output vector.  We write
$\evmu_{\lambda,\alpha}=(\emu_{\lambda,\alpha,1},\ldots,\emu_{\lambda,\alpha,n})'$,
where $\emu_{\lambda,\alpha,k}=\alpha\emu_{\lambda,k}$.

A risk of LASSO with scaling is 
\begin{equation}
R_n(\lambda,\alpha)=\frac{1}{n}\E\left[\|\evmu_{\lambda,\alpha}-\vmu\|^2\right].
\end{equation}
Especially, $R_n(\lambda,1)$ is a risk of LASSO.
By the previous discussion, it is given by
\begin{align}
\label{eq:risk-ls-1}
R_n(\lambda,\alpha)
&=\frac{1}{n}\E\|\evmu_{\lambda,\alpha}-\y\|^2-\sigma^2
+\DF_n(\lambda,\alpha),
\end{align}
where
\begin{equation}
\DF_n(\lambda,\alpha)=\frac{2}{n}\E(\evmu_{\lambda,\alpha}-\E\evmu_{\lambda,\alpha})'(\y-\vmu). 
\end{equation}
In \cite{ZHT2007}, for LASSO estimate, 
\begin{equation}
\label{eq:risk-ls-2}
\DF_n(\lambda,1)=\frac{2\sigma^2}{n}\E\ek_{\lambda}
\end{equation}
has been shown via the above Stein's formula. By the definition of $\evmu_{\lambda,\alpha}$, we
thus have
\begin{equation}
\label{eq:risk-ls-3}
R_n(\lambda,\alpha)
=\frac{1}{n}\E\|\alpha\evmu_{\lambda}-\y\|^2-\sigma^2+\frac{2\alpha\sigma^2}{n}\E\ek_{\lambda}.
\end{equation}
Of course, this reduces to a risk of LASSO when $\alpha=1$. By
(\ref{eq:risk-ls-3}), SURE for LASSO is given by
\begin{equation}
\label{eq:ube-risk-LASSO}
\eR_n(\lambda,\sigma^2)
=-\sigma^2+\frac{1}{n}\|\evmu_{\lambda}-\y\|^2+\frac{2\sigma^2}{n}\ek_{\lambda}.
\end{equation}

On the other hand, by setting the derivative of (\ref{eq:risk-ls-3})
with respect to $\alpha$ to zero, the minimizing scaling value of
$R_n(\lambda,\alpha)$ is given by
\begin{equation}
\label{eq:alphaopt}
\alphaopt=\frac{\E\evmu_{\lambda}'\y-\sigma^2\E\ek_{\lambda}}{\E\|\evmu_{\lambda}\|^2}
\end{equation}
if $\E\|\evmu_{\lambda}\|^2\neq 0$. If $\lambda$ is not transition point
then we have
\begin{equation}
\label{eq:alphaopt-1}
\alphaopt
=1+\frac{\lambda\E\|\evbeta_{\lambda}\|_1}{\E\|\evmu_{\lambda}\|^2}
-\frac{\sigma^2\E\ek_{\lambda}}{\E\|\evmu_{\lambda}\|^2}
\end{equation}
by (\ref{eq:evmu2-evmuy=l1}).
Through a simple calculation using (\ref{eq:risk-ls-3}) and
(\ref{eq:alphaopt}), we have
\begin{equation}
\label{eq:R1-Ralphaopt}
 R_n(\lambda,1)-R_n(\lambda,\alphaopt)
 =\frac{1}{n}(\alphaopt-1)^2\E\|\evmu_{\lambda}\|^2.
\end{equation}
Therefore, the optimal scaling value improves naive LASSO at
any $\lambda$. In case of an orthogonal design in a nonparametric
regression problem such as wavelet\cite{DJ1994,DJ1995}, it is shown in
\cite{KH2016a} that the right hand side of (\ref{eq:R1-Ralphaopt}) is
$O(n^{-1}\log n)$.

\subsection{Data-dependent empirical scaling value}

One choice of a scaling value in applications is
$(\evmu_{\lambda}'\y-\sigma^2\ek_{\lambda})/\|\evmu_{\lambda}\|^2$ that
is an empirical estimate of $\alphaopt$.  In LASSO,
$\evmu_{\lambda}=\0_n$ happens to occur when $\lambda$ is
large. Therefore, this scaling value may not be stable. Moreover, the
scaling value can be smaller than one depending on the noise variance.
Also, it is difficult to handle this estimate since $\ek_{\lambda}$
is a dis-continuous function of $\y$.  As an another choice, we may have
$\evmu_{\lambda}'\y/\|\evmu_{\lambda}\|^2$ that minimizes the squared
distance between $\y$ and $\alpha\evmu_{\lambda}$; i.e. it approaches
LASSO estimator to the least squares one. However, again, this may not be
stable. Then, for a stable scaling value, we consider
\begin{equation}
\label{eq:def-ealpha-1}
\ealpha=\frac{\evmu_{\lambda}'\y+\delta}{\|\evmu_{\lambda}\|^2+\delta},
\end{equation}
where $\delta$ is a fixed positive constant.  Note that $\delta$ is not a
tuning parameter (hyper-parameter) and is a constant for stabilizing
$\ealpha$. Therefore, it is set to be a small value, say, $10^{-6}$ in
applications. By (\ref{eq:evmu2-evmuy=l1}), we can write
\begin{equation}
\label{eq:def-ealpha-2}
\ealpha=1+\frac{\lambda\|\evbeta_{\lambda}\|_1}{\|\evmu_{\lambda}\|^2+\delta}
\end{equation}
for non-transition $\lambda$.  Therefore, $\ealpha\ge 1$ holds; i.e. it
really behaves as an expansion parameter. Moreover, $\ealpha\simeq 1$
for a small $\lambda$. This is a nice property since the bias problem in
LASSO is serious when $\lambda$ is large and is not essential when it is
small. We have three facts relating to $\ealpha$.
The first one shows an effect of the introduction of $\ealpha$.

\begin{property}
\label{lemma:smaller-RSS}
For a non-transition $\lambda$, 
\begin{equation}
\|\y-\tvmu_{\lambda}\|^2\le
\|\y-\evmu_{\lambda,\ealpha}\|^2\le\|\y-\evmu_{\lambda,1}\|^2
\end{equation}
holds.
\end{property}

\begin{proof}
The first inequality is obvious because $\tvmu_{\lambda}$ is the least squares
 solution under $\eXv_{\lambda}$; i.e. it is a projection of $\y$ onto a
 linear subspace determined by column vectors of $\eXv_{\lambda}$.
For simplicity, we define $m_2=\|\evmu_{\lambda,1}\|^2$ and 
$p_1=\lambda\|\evbeta_{\lambda}\|_1$. We then obtain
 \begin{align}
\label{eq:lemma-smaller-RSS} 
&\|\y-\evmu_{\lambda,\ealpha}\|^2\notag\\
&=\|\y-\ealpha\evmu_{\lambda,1}\|^2\notag\\
&=\|\y-\evmu_{\lambda,1}+\evmu_{\lambda,1}-\ealpha\evmu_{\lambda,1}\|^2\notag\\
&=\|\y-\evmu_{\lambda,1}\|^2+(1-\ealpha)^2m_2
+2(1-\ealpha)\evmu_{\lambda,1}'(\y-\evmu_{\lambda,1})\notag\\
&=\|\y-\evmu_{\lambda,1}\|^2+(1-\ealpha)^2m_2+2(1-\ealpha)p_1\notag\\
&=\|\y-\evmu_{\lambda,1}\|^2+(1-\ealpha)^2m_2-2(1-\ealpha)^2(m_2+\delta)\notag\\
&=\|\y-\evmu_{\lambda,1}\|^2-(1-\ealpha)^2(m_2+2\delta),
 \end{align} 
where we used (\ref{eq:evmu2-evmuy=l1}) in the fourth line and 
(\ref{eq:def-ealpha-2}) in the fifth line.
\end{proof}
Therefore, the introduction of $\ealpha$ surely reduces the residual sum
compared to a LASSO estimate. This implies that $\ealpha$ moves the
LASSO estimator toward the least squares estimator at each $\lambda$.
We here consider
\begin{equation}
\ed(\lambda)=(1-\ealpha)^2(m_2+2\delta).
\end{equation}
As found in (\ref{eq:lemma-smaller-RSS}), 
$\ed(\lambda)$ is the difference between residuals of naive LASSO and 
LASSO with scaling.  Note that this is a function of
$\lambda$ if the training data is given and $\X$ is determined.

\begin{property}
\label{lemma:ed-lambda}
For simplicity, we consider a specific case where $\delta=0$. We assume
that $\|\evbeta_{\lambda}\|_1\neq 0$ holds and $\lambda$ is a
 non-transition point.
Let $\rho_{\min}$ and $\rho_{\max}$ be the
minimum and maximum eigenvalues of $\X'\X/n$ and assume $\rho_{\min}>0$.
Then we have
\begin{equation}
\label{eq:lemma-ed-lambda} 
\frac{\lambda^2}{n\rho_{\max}}\le 
\ed(\lambda)\le
\frac{\lambda^2m^2}{n\rho_{\min}}.
\end{equation}
\end{property}

\begin{proof}
Since
\begin{equation}
\ed(\lambda)
  =\lambda^2\|\evbeta_{\lambda}\|_1^2\frac{\|\evmu_{\lambda,1}\|^2+2\delta}
  {(\|\evmu_{\lambda,1}\|^2+\delta)^2}
  =\frac{\lambda^2\|\evbeta_{\lambda}\|_1^2}{\|\evmu_{\lambda,1}\|^2}
\end{equation}
holds in case of $\delta=0$, we have
\begin{equation}
\frac{\lambda^2\|\evbeta_{\lambda}\|_1^2}
 {n\rho_{\max}\|\evbeta_{\lambda}\|^2}\le 
 \ed(\lambda)
 \le\frac{\lambda^2\|\evbeta_{\lambda}\|_1^2}
{n\rho_{\min}\|\evbeta_{\lambda}\|^2}.
\end{equation}
By the equivalence of the norms, this reduces to 
(\ref{eq:lemma-ed-lambda}), where we used $\ek_{\lambda}\le m$.
\end{proof}

Therefore, the introduction of $\ealpha$ improves the degree of fitting
to the given data especially when $\lambda$ is large; i.e. a sparse
situation. We next argue on a probabilistic behavior of $\ealpha$.

\begin{property}
 \label{lemma:ub-1-ealpha}
 \begin{equation}
\label{eq:ub-1-ealpha}
\E\left[\ealpha-1\right]\le \max\left(1/\delta,m^2/\rho_{\min}\right)\frac{\lambda}{\sqrt{n}}
\end{equation}
holds.
\end{property} 

\begin{proof}
Since the probability that a fixed $\lambda$ is a transition point is
zero as in \cite{ZHT2007}, $\lambda$ is assumed to not be a transition
pont below.  We define an event
$E=\left\{\|\evbeta_{\lambda}\|_1\le\theta_n\right\}$, where
 $\theta_n>0$. $E^C$ denotes the complement of $E$.
By (\ref{eq:def-ealpha-2}), we have
 \begin{align}
  \E\left[\ealpha-1|E\right]\le\frac{1}{\delta}\E\left[\left.\lambda\|\evbeta_{\lambda}\|_1\right|E\right]\le\lambda\theta_n/\delta.
 \end{align}
 We also have
 \begin{align}
  \E\left[\ealpha-1|E^C\right]
  &\le\E\left[\left.\frac{\lambda\|\evbeta_{\lambda}\|_1}{n\rho_{\min}\|\evbeta_{\lambda}\|^2}\right|E^C\right]\notag\\
  &\le\E\left[\left.\frac{\lambda\ek_{\lambda}^2}{n\rho_{\min}\|\evbeta_{\lambda}\|_1}\right|E^C\right]\notag\\
  &\le\frac{\lambda m^2}{n\rho_{\min}\theta_n}.
 \end{align}
Since $\E[\ealpha-1]=\E[\ealpha-1|E]\P[E]+\E[\ealpha-1|E^C]\P[E^C]$, 
 we have (\ref{eq:ub-1-ealpha}) by taking $\theta_n=1/(2\sqrt{n})$.
\end{proof}

We consider the case where $\rho_{\min}$ and $m$ are constants.  This is
a natural setting of a classical linear regression problem.  In this
case, by the above result, the expectation of the degree of expansion is
bounded above by $O(1/\sqrt{n})$. Therefore, the effect of expansion by
$\ealpha$ is small when $n$ is large and $\X$ is fixed. This is also
found in the previous result.

\subsection{Model selection criterion under empirical scaling}

Now, we consider to derive a $C_p$-type model selection criterion for
$\evmu_{\lambda,\ealpha}$. For this purpose, we derive an unbiased
estimate of a risk for $\evmu_{\lambda,\ealpha}$. To do this, by
(\ref{eq:risk-ls-1}), we need to calculate the degree of freedom of
$\evmu_{\lambda,\ealpha}$. We define it by
\begin{equation}
\DF_n^{\rm sca}(\lambda)=
\frac{2}{n}\E\left[(\evmu_{\lambda,\ealpha}-\E\left[\evmu_{\lambda,\ealpha}\right])'
(\y-\vmu)\right]. 
\end{equation}

\begin{theorem}
\label{theorem:DFsca}
We have
\begin{equation}
\DF_n^{\rm sca}(\lambda)=\frac{2\sigma^2}{n}\E\left[\ed_1+\ed_2\right],
\end{equation}
where
\begin{align}
\label{eq:def-ed_1}
 \ed_1&=(1-\ealpha)\frac{\|\evmu_{\lambda}\|^2-\delta}{\|\evmu_{\lambda}\|^2+\delta}\\
\label{eq:def-ed_2} 
\ed_2&=\ealpha\ek_{\lambda}.
 \end{align}
\end{theorem}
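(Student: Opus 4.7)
The plan is to apply Stein's formula (\ref{eq:general-df-stein}) directly to the scaled estimator. For non-transition $\lambda$ (which, as noted in the paper following \cite{ZHT2007}, occurs with probability one in $\y$), the active set $\eB_{\lambda}$ and sign vector $\eSv_{\lambda}$ are locally constant functions of $\y$, so from (\ref{eq:evmu-lemma1-ZHT2007-2}) we have $\partial\evmu_{\lambda}/\partial\y=\eHv_{\lambda}$ and thus $\nabla\cdot\evmu_{\lambda}=\tr\eHv_{\lambda}=\ek_{\lambda}$, which is the computation underlying (\ref{eq:risk-ls-2}). Given this, almost-differentiability of $\evmu_{\lambda,\ealpha}$ as required by Stein's lemma follows along the same lines as for the unscaled LASSO estimator, so it suffices to compute $\nabla\cdot\evmu_{\lambda,\ealpha}$ pointwise and show it equals $\ed_1+\ed_2$, after which taking $(2\sigma^2/n)\E[\cdot]$ of both sides yields the claim.

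By the product rule applied to $\evmu_{\lambda,\ealpha}=\ealpha\evmu_{\lambda}$,
\begin{equation*}
\nabla\cdot\evmu_{\lambda,\ealpha}
=\ealpha\,\nabla\cdot\evmu_{\lambda}+\sum_{i=1}^n\emu_{\lambda,i}\frac{\partial\ealpha}{\partial y_i}
=\ealpha\ek_{\lambda}+\evmu_{\lambda}'\nabla\ealpha.
\end{equation*}
The first piece is already $\ed_2$, so the real work is reducing $\evmu_{\lambda}'\nabla\ealpha$ to $\ed_1$. I would write $\ealpha=N/D$ with $N=\evmu_{\lambda}'\y+\delta$ and $D=\|\evmu_{\lambda}\|^2+\delta$, and then use symmetry of $\eHv_{\lambda}$ together with (\ref{eq:H-evmu}) to evaluate $\partial N/\partial y_i=(\tvmu_{\lambda})_i+\emu_{\lambda,i}$ (since $\sum_j y_j\partial\emu_{\lambda,j}/\partial y_i=(\eHv_{\lambda}\y)_i=(\tvmu_{\lambda})_i$) and $\partial D/\partial y_i=2\emu_{\lambda,i}$ (since $\sum_j\emu_{\lambda,j}\partial\emu_{\lambda,j}/\partial y_i=(\eHv_{\lambda}\evmu_{\lambda})_i=\emu_{\lambda,i}$). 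Contracting with $\evmu_{\lambda}$ and again invoking (\ref{eq:H-evmu}) to replace $\evmu_{\lambda}'\tvmu_{\lambda}$ by $\evmu_{\lambda}'\y=N-\delta$, the quotient rule gives
\begin{equation*}
\evmu_{\lambda}'\nabla\ealpha=\frac{(N-\delta)+\|\evmu_{\lambda}\|^2}{D}-\frac{2N\|\evmu_{\lambda}\|^2}{D^2}.
\end{equation*}

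From here the algebra collapses: substituting $D=\|\evmu_{\lambda}\|^2+\delta$ and $\ealpha=N/D$ into the two fractions and grouping the $\ealpha$ terms, the right-hand side simplifies to $(1-\ealpha)(\|\evmu_{\lambda}\|^2-\delta)/(\|\evmu_{\lambda}\|^2+\delta)$, which is precisely $\ed_1$ by definition (\ref{eq:def-ed_1}). The main obstacle is not conceptual but organizational: keeping track of which derivatives vanish because $\eHv_{\lambda}$ and $\eSv_{\lambda}$ are locally constant at a non-transition $\lambda$, and applying (\ref{eq:H-evmu}) at the two moments where it is needed to turn the $\tvmu_{\lambda}$ and the $\eHv_{\lambda}\evmu_{\lambda}$ that appear under the $\sum_i\emu_{\lambda,i}\partial_i$ contraction back into $\evmu_{\lambda}$ and $\y$. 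Once that bookkeeping is done, Stein's identity delivers the theorem.
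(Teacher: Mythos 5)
Your proposal is correct and follows essentially the same route as the paper: Stein's lemma applied to $\evmu_{\lambda,\ealpha}$, the product rule splitting $\nabla\cdot\evmu_{\lambda,\ealpha}$ into $\ealpha\ek_{\lambda}$ plus $\evmu_{\lambda}'\nabla\ealpha$, and a quotient-rule computation that collapses to $\ed_1$. The only cosmetic difference is that you differentiate $\evmu_{\lambda}'\y$ directly via symmetry of $\eHv_{\lambda}$ and (\ref{eq:H-evmu}), whereas the paper routes through the identity (\ref{eq:evmu2-evmuy}) and local constancy of $\eqv_{\lambda}$; both give $2\evmu_{\lambda}+\lambda\eqv_{\lambda}$, and your algebraic simplification checks out.
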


\begin{proof}
We drop $\lambda$ from expressions for simplicity since we fix $\lambda$
below. We thus write $\evbeta=\evbeta_{\lambda}$, $\eSv=\eSv_{\lambda}$, 
$\evmu_{\alpha}=\evmu_{\lambda,\alpha}$, $\eB=\eB_{\lambda}$ and $\ek=\ek_{\lambda}$.

We can write $\evmu_{\alpha}=\alpha\X_{\eB}\evbeta$.  Especially,
$\evmu_1$ is a LASSO output. For simplicity, we write $\evmu=\evmu_1$
below. We denote the $k$th member of $\evmu_{\alpha}$ by
$\emu_{\alpha,k}$.  In \cite{ZHT2007}, it is shown that, for any fixed
$\lambda$, $\emu_{1,k}:\R^n\mapsto\R$, $k=1,\ldots,n$ are almost
differentiable. By (\ref{eq:def-ealpha-1}),
$\ealpha\emu_{1,k}:\R^n\mapsto\R$ is calculated by arithmetic operations
of the components of $\y$ and $\evmu$. Therefore, $\ealpha\emu_{1,k}$
is almost differentiable since it essentially requires a coordinate-wise
absolutely continuity. As a result, Stein's lemma can be applied to
$\evmu_{\ealpha,k}$ and, by (\ref{eq:general-df-stein}), we have
\begin{align}
\DF_n^{\rm sca}(\lambda)&=\frac{2\sigma^2}{n}\E\left[\nabla\cdot\evmu_{\ealpha}\right],
\end{align} 
where
\begin{align}
\nabla\cdot\evmu_{\ealpha}=
\tr\frac{\partial\evmu_{\ealpha}}{\partial\y}=
\sum_{i=1}^n\frac{\partial\mu_{\ealpha,i}}{\partial y_i}.
\end{align} 
Since 
\begin{align}
\sum_{i=1}^n\frac{\partial }{\partial y_i}\emu_{\lambda,\ealpha,i}
=\sum_{i=1}^n\emu_{1,i}\left(\frac{\partial }{\partial y_i}\ealpha\right)
+\ealpha\sum_{i=1}^n\left(\frac{\partial }{\partial y_i}\emu_{1,i}\right),
\end{align}
holds, we have
\begin{equation}
\label{eq:nabla-evmu-ealpha}
\nabla\cdot\evmu_{\ealpha}=\evmu'\left(\frac{\partial\ealpha}{\partial\y}\right)+
\ealpha\nabla\cdot\evmu,
\end{equation}
where $\partial \ealpha/\partial \y$ is an $n$-dimensional vector whose
$i$th entry is $\partial \ealpha/\partial y_i$.
Since the probability that a fixed $\lambda$ is a transition point is
zero as in \cite{ZHT2007}, $\lambda$ is assumed to not be a
transition pont below.
 
For the second term of 
(\ref{eq:nabla-evmu-ealpha}), it has been shown in \cite{ZHT2007} that
\begin{equation}
\label{eq:d-evmu-y-ZHT2007}
\frac{\partial\evmu}{\partial\y}=\eHv
\end{equation}
by (\ref{eq:lemma1-ZHT2007}) and 
the local constancy of $\eqv$ under a fixed $\lambda$. And, we thus have
\begin{equation}
\label{eq:divergence-LASSO}
\nabla\cdot\evmu=\tr\eHv=\ek
\end{equation}
by the idempotence of $\eHv$. Therefore, the second term of
(\ref{eq:nabla-evmu-ealpha}) is equal to $\ed_2$.

We evaluate the first term in below. Since
\begin{equation}
\frac{\partial}{\partial y_k}\|\evmu\|^2
=\frac{\partial}{\partial y_k}\sum_{j=1}^n\emu_j^2
=2\sum_{j=1}^n\emu_j\frac{\partial\emu_j}{\partial y_k},
\end{equation}
we have
\begin{equation}
\label{eq:d-evmu2}
\frac{\partial\|\evmu\|^2}{\partial\y}
=2\left(\frac{\partial\evmu}{\partial\y}\right)\evmu=2\eHv\evmu=2\evmu
\end{equation}
by (\ref{eq:d-evmu-y-ZHT2007}) and (\ref{eq:H-evmu}).
On the other hand, we have
\begin{equation}
\label{eq:d-evmu-y}
\frac{\partial\evmu'\y}{\partial\y}
=\frac{\partial}{\partial\y}
\left\{\|\evmu\|^2+\lambda\eqv'\y-\lambda^2\|\eqv\|^2\right\}
=2\evmu+\lambda\eqv
\end{equation}
by (\ref{eq:d-evmu2}), (\ref{eq:evmu2-evmuy}) in Lemma
\ref{lemma:several-equations} and
local constancy of $\eqv$ as in \cite{ZHT2007}.

By (\ref{eq:d-evmu2}), (\ref{eq:d-evmu-y}) and (\ref{eq:evmu2-evmuy}) in
Lemma \ref{lemma:several-equations}, we have
\begin{align}
\frac{\partial\ealpha}{\partial\y}
&=\frac{
\left(\|\evmu\|^2+\delta\right)\frac{\partial}{\partial\y}\evmu'\y
 -(\evmu'\y+\delta)\frac{\partial}{\partial\y}\|\evmu\|^2}
 {\left(\|\evmu\|^2+\delta\right)^2}\notag\\
&=\frac{\left(\|\evmu\|^2+\delta\right)(2\evmu+\lambda\eqv)
 -2(\evmu'\y+\delta)\evmu}{\left(\|\evmu\|^2+\delta\right)^2}\notag\\
&=\frac{\left(\|\evmu\|^2+\delta\right)(2\evmu+\lambda\eqv)
 -2\ealpha\left(\|\evmu\|^2+\delta\right)\evmu}
 {\left(\|\evmu\|^2+\delta\right)^2}\notag\\
&=\frac{1}{\|\evmu\|^2+\delta}
\left\{2\evmu+\lambda\eqv-2\ealpha\evmu\right\},
\end{align}
where the third line comes from (\ref{eq:def-ealpha-1}).
Therefore, we obtain
\begin{align}
\evmu'\left(\frac{\partial\ealpha}{\partial\y}\right)
&=\frac{1}{\|\evmu\|^2+\delta}
\left\{2\|\evmu\|^2+\lambda\evmu'\eqv-2\ealpha\|\evmu\|^2\right\}\notag\\
&=\frac{1}{\|\evmu\|^2+\delta}
\left\{2\|\evmu\|^2+\lambda\|\evbeta\|_1-2\ealpha\|\evmu\|^2\right\}\notag\\
&=\frac{2}{\|\evmu\|^2+\delta}(1-\ealpha)\|\evmu\|^2+
\frac{\lambda\|\evbeta\|_1}{\|\evmu\|^2+\delta}\notag\\
&=\frac{2}{\|\evmu\|^2+\delta}(1-\ealpha)\|\evmu\|^2+
\frac{\evmu'\y+\delta-\delta-\|\evmu\|^2}{\|\evmu\|^2+\delta}\notag\\
&=\frac{2}{\|\evmu\|^2+\delta}(1-\ealpha)\|\evmu\|^2-(1-\ealpha)\notag\\
&=(1-\ealpha)\frac{\|\evmu\|^2-\delta}{\|\evmu\|^2+\delta}
\end{align}
where we used (\ref{eq:def-ealpha-1}) and 
 (\ref{eq:evmu-eqv}), (\ref{eq:evmu2-evmuy=l1}).
\end{proof}

We have two remarks on this theorem.
\begin{itemize}
 \item Our discussion is always applicable when $\X'\X$ is not
       singular.
 \item $\E[\ed_1]\le O\left(1/\sqrt{n}\right)$ by Lemma
\ref{lemma:ub-1-ealpha} since $|\ed_1|\le\ealpha-1$.
\end{itemize}
By this theorem, the risk for $\evmu_{\lambda,\ealpha}$ is given by
\begin{align}
 R_n^{\rm sca}(\lambda)
 &=\frac{1}{n}\E\left[\|\vmu-\evmu_{\lambda,\ealpha}\|^2\right]\notag\\
&=-\sigma^2 +\frac{1}{n}\E\left[\|\y-\evmu_{\lambda,\ealpha}\|^2\right]
+\DF_n^{\rm sca}(\lambda).
\end{align}
Therefore, SURE for LASSO with scaling is given by
\begin{equation}
\label{eq:ube-risk-LASSO-S}
\eR_n^{\rm sca}(\lambda,\sigma^2)= 
-\sigma^2
+\frac{1}{n}\|\y-\evmu_{\lambda,\ealpha}\|^2+\frac{2\sigma^2}{n}\left(\ed_1+\ed_2\right),
\end{equation}
where $\ed_1$ and $\ed_2$ are defined by (\ref{eq:def-ed_1}) and
(\ref{eq:def-ed_2}) respectively.

\subsection{Estimate of noise variance}

To compute a $C_p$-type model selection criterion based on SURE, we need
an appropriate estimate of $\sigma^2$. For estimating the noise variance in a
regression problem, \cite{CE1992} has recommended to apply
\begin{equation}
\label{eq:esigma2}
\esigma^2_{\rm CE}=\frac{\y'(\I_n-\H_{\gamma})^2\y} {\tr[(\I_n-\H_{\gamma})^2]},
\end{equation} 
where $\H_{\gamma}=\X(\X'\X+\gamma\I_n)^{-1}\X'$ with
$\gamma>0$. $\H_{\gamma}$ can be viewed as the hat matrix in a ridge
regression with a ridge parameter $\gamma>0$ or, equivalently,
an $\ell_2$ regularization with a regularization parameter $\gamma$. In
general, the $\ell_2$ regularization is introduced for better
generalization and stabilization. We need to carefully select the
parameter value for the former reason. However, since the purpose to
introduce $\gamma$ here is to stabilize an estimate of the noise
variance. Therefore, we just set $\gamma$ to a small value, say,
$10^{-6}$ in applications.  Especially, this is effective when $m$ is
large; i.e. when a colinearity problem arises under a full model.

\section{Numerical examples}

In this section, through a simple numerical example, we verify our result
on SURE for LASSO with scaling and compare our method with naive LASSO,
MCP and Adaptive LASSO. We refer to Adaptive LASSO as A-LASSO and LASSO
with scaling $\ealpha$ as LASSO-S. 

\subsection{Setting of experiments}

For $u\in\R$, we define
$g_{\tau}(u,\xi)=\exp\left\{(u-\xi)^2/(2\tau)\right\}$, where $\xi\in\R$
and $\tau>0$. Let $u_i$, $i=1,\ldots,n$ be equidistant points in
$[-5,5]$. Let $\{\xi_1,\ldots,\xi_m\}$ be a subset of
$\{u_1,\ldots,u_n\}$, where $m\le n$. We take $\xi_j=u_{(n/m)j}$,
$j=1,\ldots,m$ by assuming $n/m$ is an integer. We define $n\times m$
matrix $\X_1$ whose $(i,j)$ entry is $g_{\tau}(u_i,\xi_j)$; i.e. the
$j$th column vector of $\X_1$ is an output vector of
$g_{\tau}(\cdot,\xi_j)$.  Let $\X_2$ be a normalized version of $\X_1$;
i.e. the mean and squared norm of each column vector of $\X_2$ are
equal to zero and $n$ respectively.  By taking account of the intercept, we
construct a design matrix by $\X=(\1_n,\X_2)$. Therefore, we consider a
curve fitting problem using a linear combination of $m$ Gaussian basis
functions whose centers are input data points that are appropriately
chosen.  We generate $y_i$ by
$y_i=\sum_{k=1}^m\beta_k^*g_{\tau}(u_i,\xi_k)+\varepsilon_i$, where
$\varepsilon_i\sim N(0,\sigma^2)$.  We define $K^*=\{k|\beta_k^*\neq
0\}$ and consider the case where $|K^*|\ll m$. This corresponds to the
case that there exists an exact sparse representation; i.e. there is a
small true representation. 

\subsection{Verification of risk estimate}

In the first numerical experiment, we verify our theoretical result of
SURE for LASSO-S. We here refer to $\eR_n(\lambda,\esigma^2_{\rm CE})$
in (\ref{eq:ube-risk-LASSO}) and $\eR_n^{\rm sca}(\lambda,\esigma^2_{\rm
CE})$ in (\ref{eq:ube-risk-LASSO-S}) as SUREs of LASSO and LASSO-S
respectively; i.e.  the noise variance is replaced with $\esigma^2_{\rm
CE}$ defined in (\ref{eq:esigma2}). These are fully empirical and,
thus, can be applied as model selection criteria.

We set $n=100$, $m=50$, $\sigma^2=1$, $\tau=0.1$, $K^*=\{5,18,31,45\}$
and $(\beta_{5}^*,\beta_{18}^*,\beta_{31}^*,\beta_{45}^*)=(1,-2,2,-1)$;
i.e. $\xi_j$'s of non-zero coefficients are almost equally positioned.
We also set $\delta=1/n$ for LASSO-S and $\gamma=10^{-6}$ in calculating
$\esigma^2_{\rm CE}$.  We here consider two cases of $\tau=0.1$ and
$\tau=0.4$. In both cases, some Gaussian functions that are close to
each other are relatively correlated. However, $4$ Gaussian functions
with non-zero coefficients (components of a target function) are nearly
orthogonal in the former case while those are still correlated in the
latter case. This condition of correlation among components in a target
function affects the consistency of model selection of LASSO,
A-LASSO and MCP.

We here employ LARS-LASSO for calculating LASSO path\cite{LARS} and use
``lars'' package\cite{LARS} in R. Since the regularization parameter
corresponds to the number of un-removed coefficients, we here observe
the relationship between the number of un-removed coefficients and
risk. Since we know the true representation, we can calculate the actual
risk by the mean squared error between the true output and estimated
output. We repeat this procedure for $1000$ times and calculate averages
of actual risks and SUREs.

The averages of actual risks and SUREs of LASSO and LASSO-S are depicted
in Fig.\ref{fig:papsim03g}. The horizontal axis is an average of the
number of non-zero coefficients (members in active set) at the each step
in LARS-LASSO. Note that, at a fixed step of LARS-LASSO, the number of
non-zero coefficients may be different for $1000$ trials. Therefore, we
take an average of those; i.e. the horizontal axis corresponds to the
number of LARS-LASSO steps while we show the number of averages of
non-zero coefficients at the steps in the horizontal axis.  In
Fig.\ref{fig:papsim03g}, we depict the results at some specific steps
(not the results at all steps) for the clarity of graphs. We have some
remarks on these results.

\begin{itemize}
\item SURE is well consistent with the actual risk for both of LASSO and
      LASSO-S. Especially, the consistency for LASSO-S verifies Theorem
      \ref{theorem:DFsca}. 
      
\item When the number of non-zero coefficients is small ($\lambda$ is
      large), LASSO-S shows  a lower risk compared to LASSO.
      This is notably for $\tau=0.1$; i.e. components of a target
      function are nearly orthogonal.

 \item The number of non-zero coefficients at which an averaged risk is
      minimized is smaller for LASSO-S than LASSO.
      This is also notable for $\tau=0.1$.
\end{itemize}

As a result, we can expect that $\eR_n^{\rm sca}(\lambda,\esigma^2_{\rm
CE})$ can be a good selector of $\lambda$ in applications; i.e. it can
choose a sufficiently sparse model with low risk.

\begin{figure}[p]
 \begin{center}
   \begin{minipage}[t]{80mm}
     \begin{center}
      \includegraphics[width=80mm]{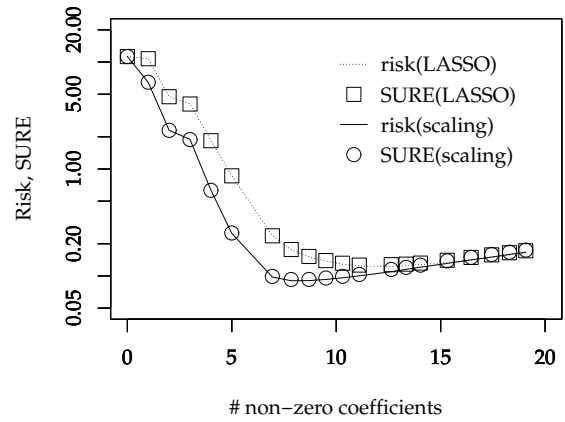}

      (a)~$\tau=0.1$
     \end{center}
   \end{minipage}

   \begin{minipage}[t]{80mm}
     \begin{center}
      \includegraphics[width=80mm]{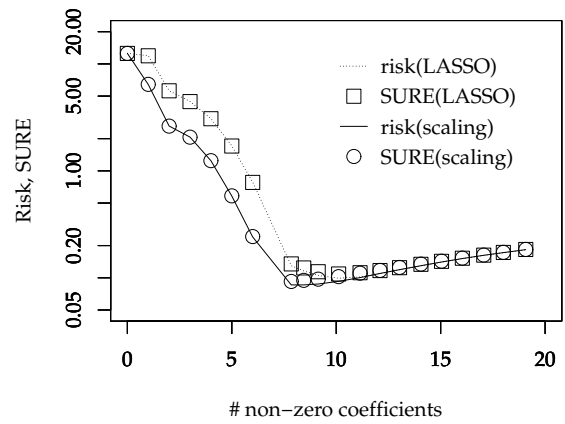}

      (b)~$\tau=0.4$
     \end{center}
   \end{minipage}

\caption{Averages of actual risks and SUREs for LASSO and LASSO-S.}
\label{fig:papsim03g}
 \end{center}
\end{figure}

\begin{figure}[p]
 \begin{center}
   \begin{minipage}[t]{80mm}
    \begin{center}
     \includegraphics[width=80mm]{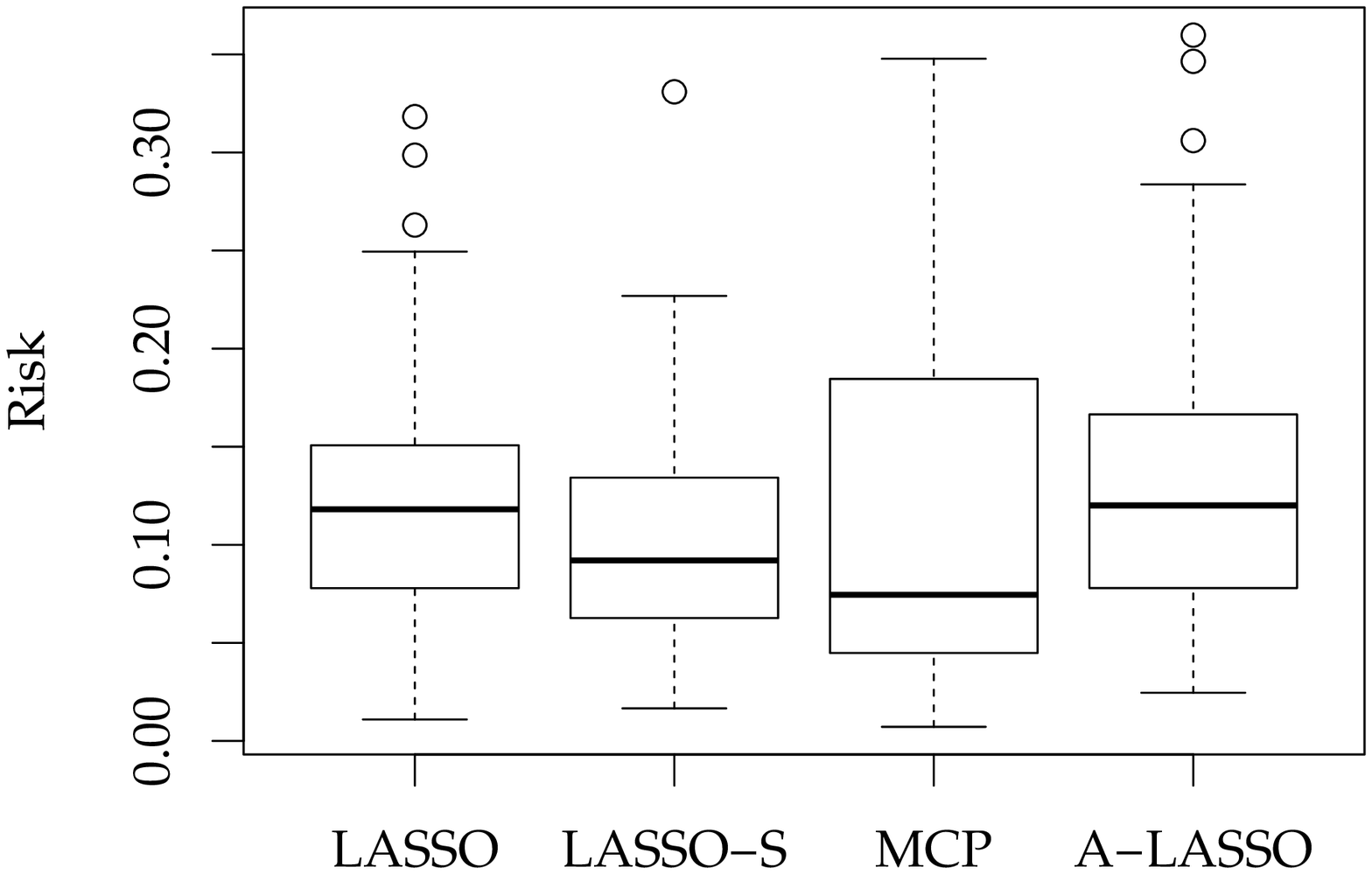}

     (a) $n=100$
    \end{center}
   \end{minipage}

   \begin{minipage}[t]{80mm}
    \begin{center}
     \includegraphics[width=80mm]{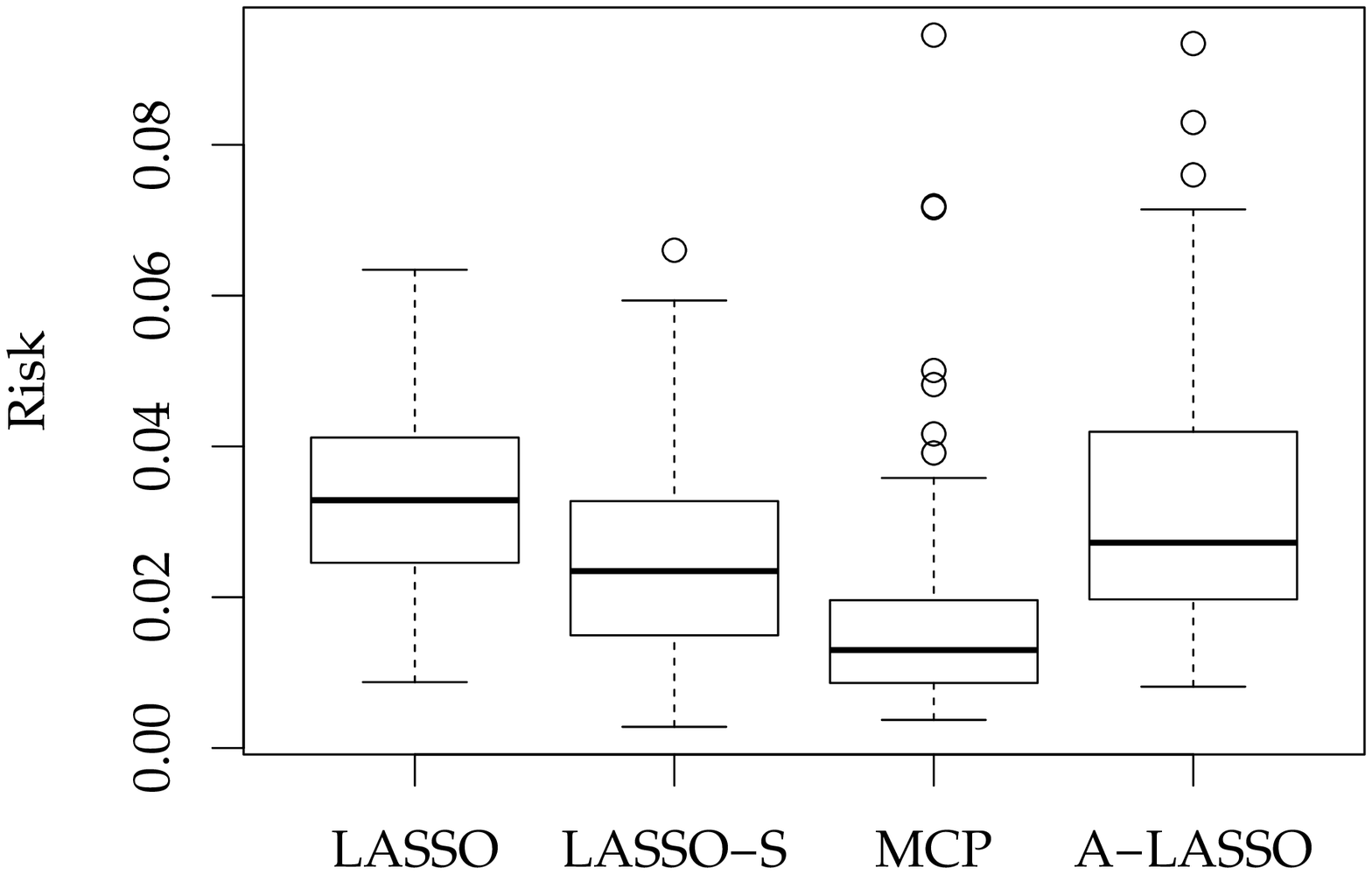}

     (b) $n=400$
    \end{center}
   \end{minipage}

\caption{Risk of selected model ($\tau=0.1$).}
\label{fig:papsim06risk}
 \end{center}
\end{figure}

\begin{figure}[p]
 \begin{center}
   \begin{minipage}[t]{80mm}
    \begin{center}
     \includegraphics[width=80mm]{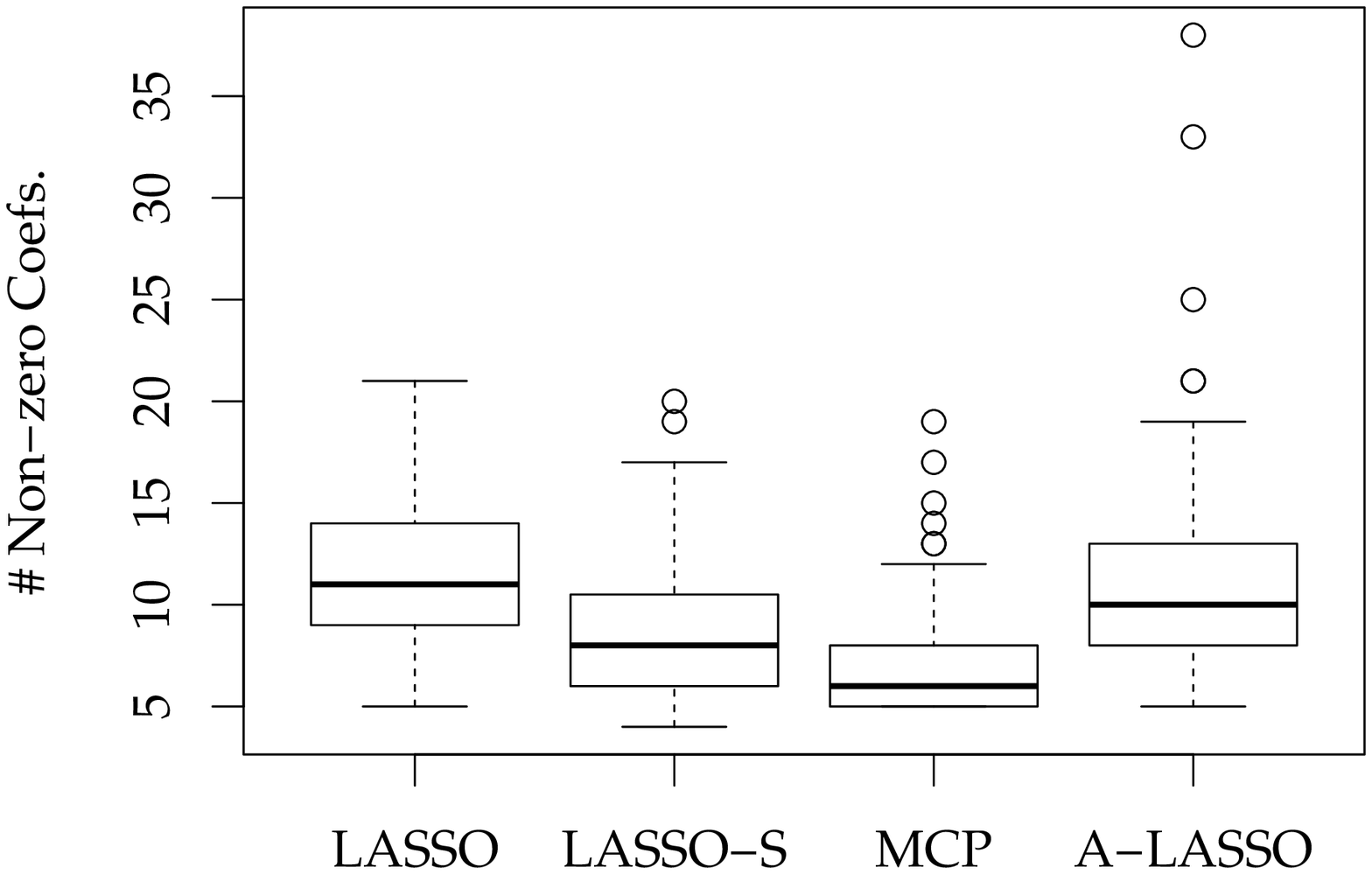}

     (a) $n=100$
    \end{center}
   \end{minipage}

   \begin{minipage}[t]{80mm}
    \begin{center}
     \includegraphics[width=80mm]{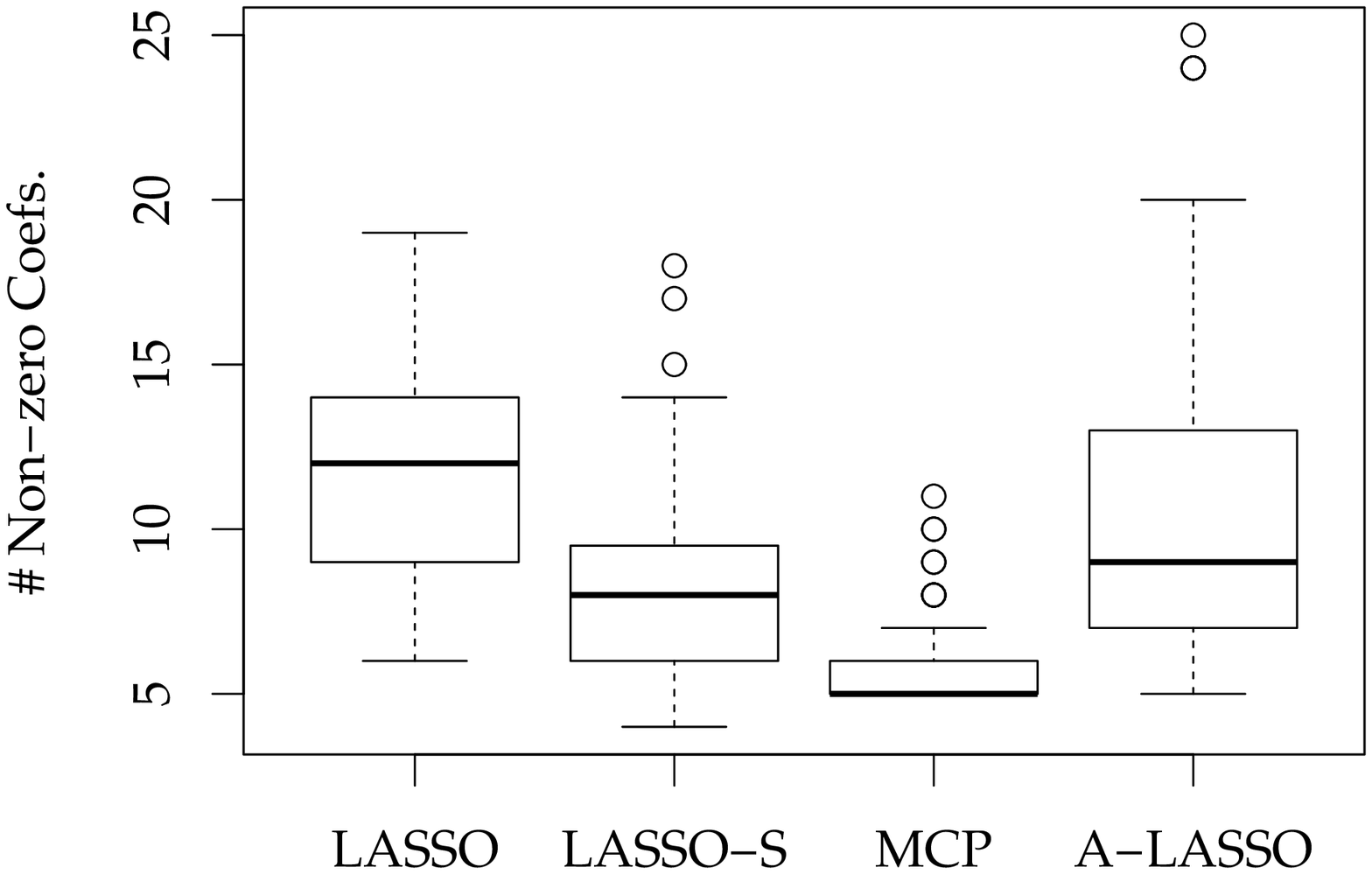}

     (b) $n=400$
    \end{center}
   \end{minipage}

\caption{The number of non-zero coefficients of selected model ($\tau=0.1$).}
\label{fig:papsim06df}
 \end{center}
\end{figure}

\begin{figure}[p]
 \begin{center}
   \begin{minipage}[t]{80mm}
    \begin{center}
     \includegraphics[width=80mm]{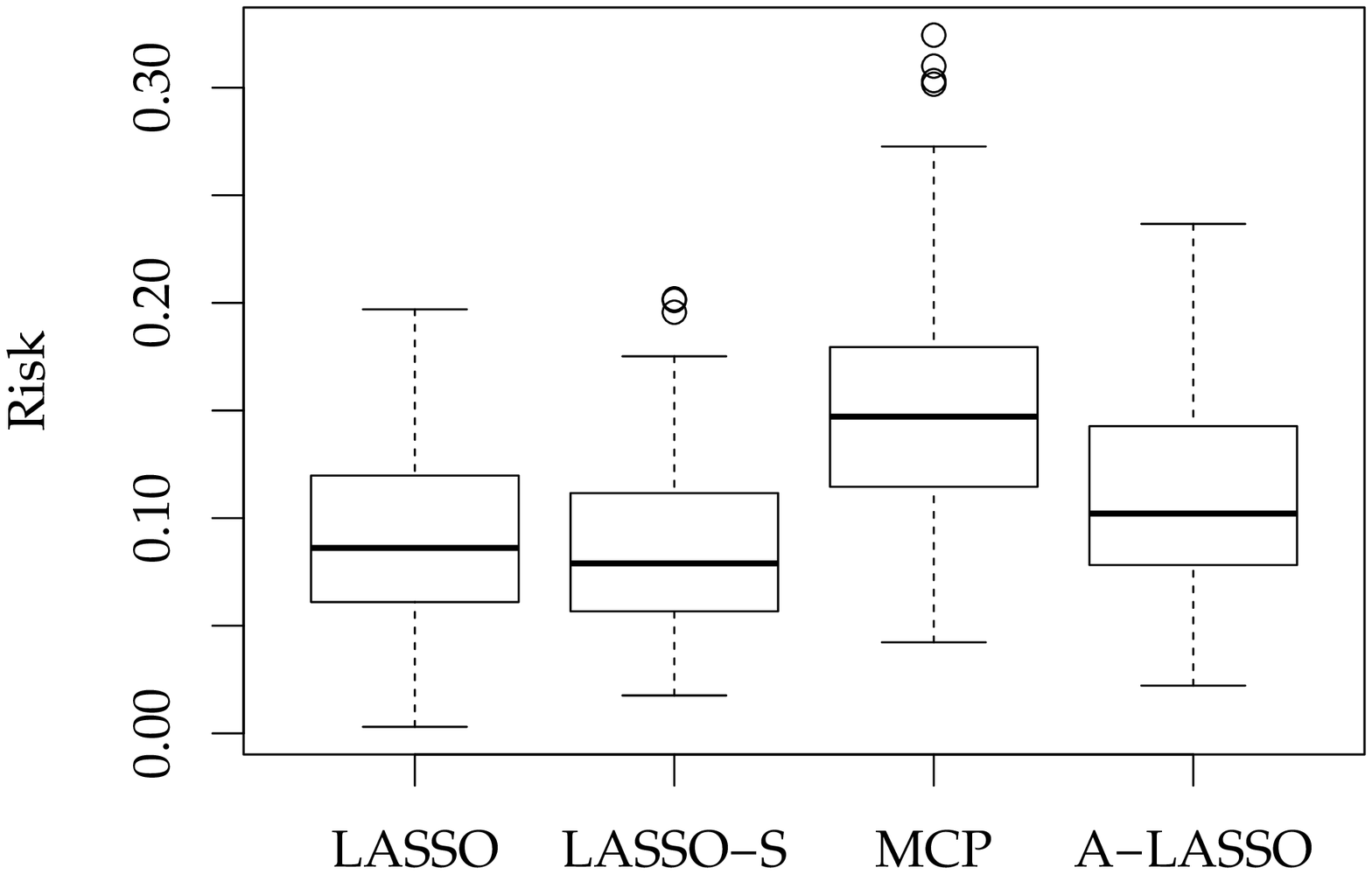}

     (a) $n=100$
    \end{center}
   \end{minipage}

   \begin{minipage}[t]{80mm}
    \begin{center}
     \includegraphics[width=80mm]{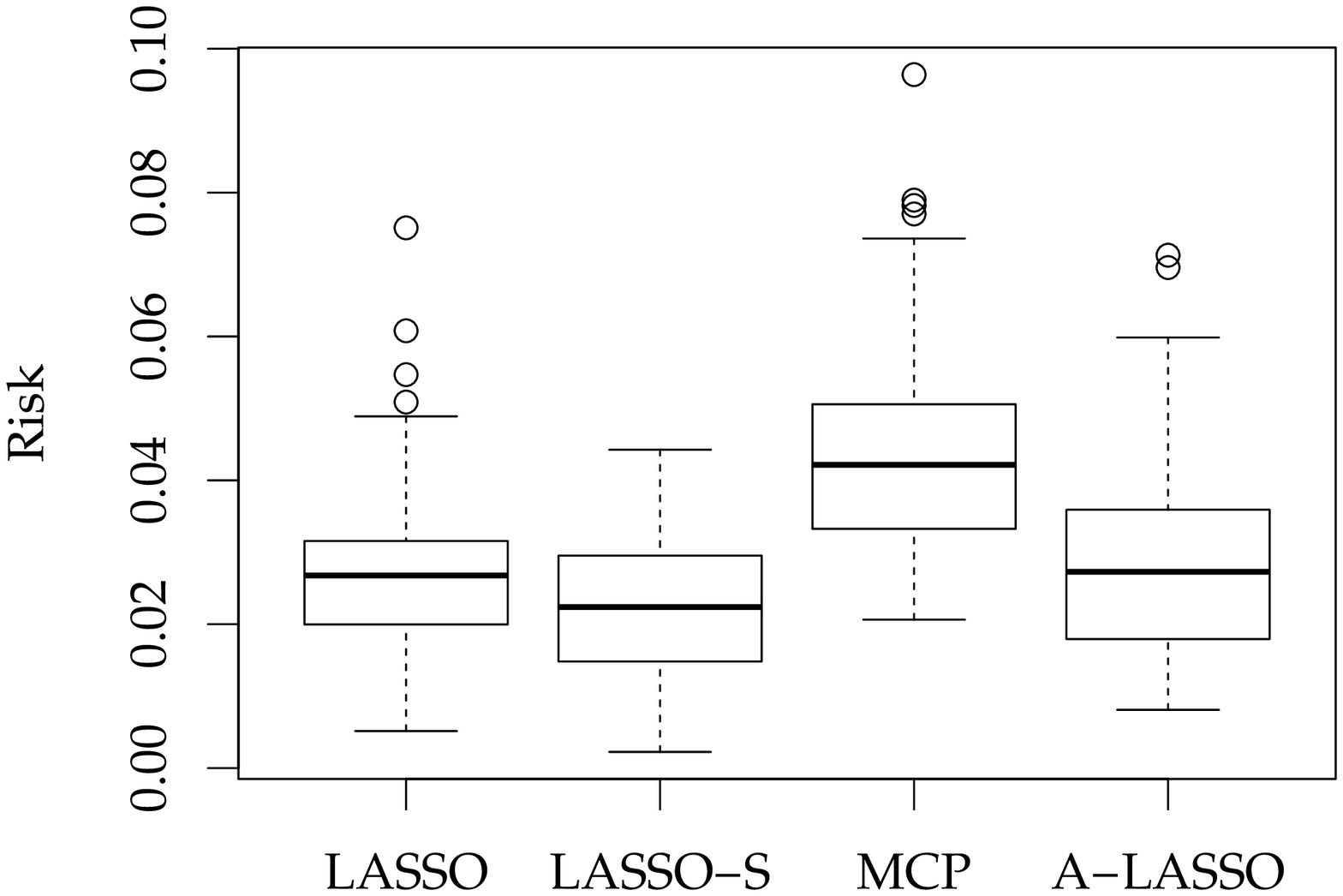}

     (b) $n=400$
    \end{center}
   \end{minipage}

\caption{Risk of selected model ($\tau=0.4$).}
\label{fig:papsim07risk}
 \end{center}
\end{figure}

\begin{figure}[p]
 \begin{center}
   \begin{minipage}[t]{80mm}
    \begin{center}
     \includegraphics[width=80mm]{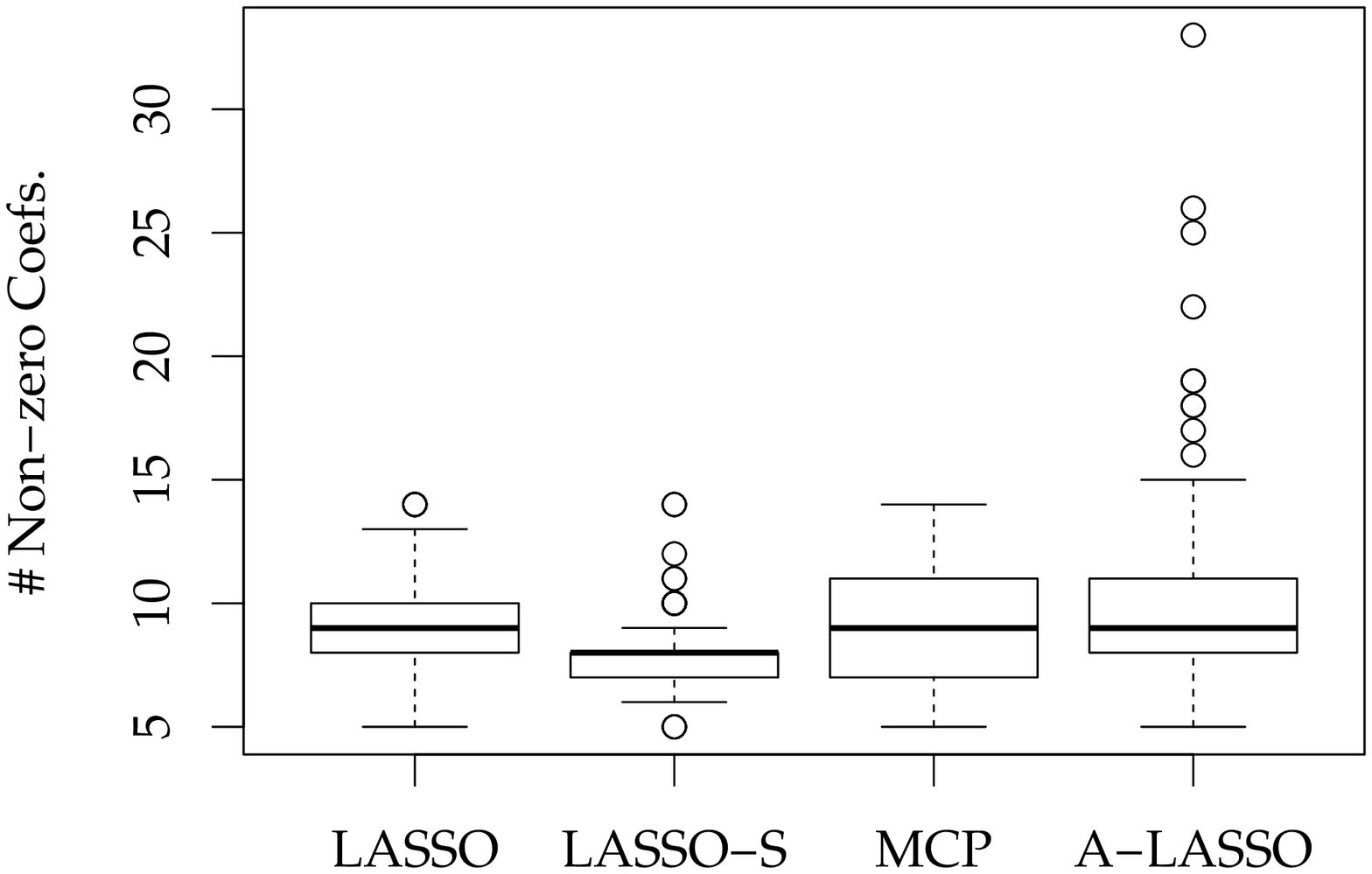}

     (a) $n=100$
    \end{center}
   \end{minipage}

   \begin{minipage}[t]{80mm}
    \begin{center}
     \includegraphics[width=80mm]{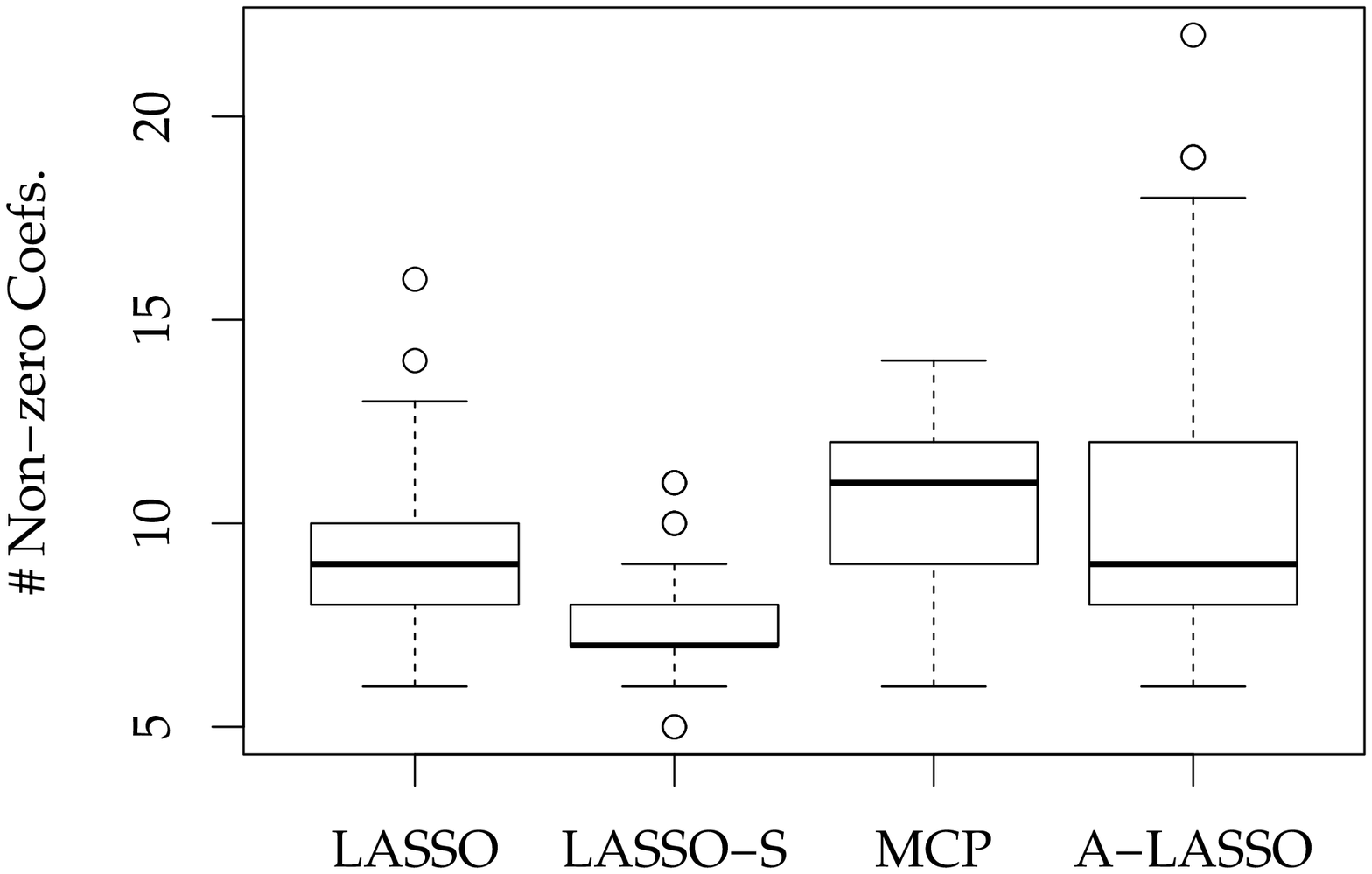}

     (b) $n=400$
    \end{center}
   \end{minipage}

\caption{The number of non-zero coefficients of selected model ($\tau=0.4$).}
\label{fig:papsim07df}
 \end{center}
\end{figure}

\subsection{Comparison to the other methods}

We here compare LASSO, LASSO-S, MCP and A-LASSO in the previous setting
of experiment although we test the cases of $n=100$ and $n=400$.
We use ``glmnet'' package\cite{glmnet} for LASSO, LASSO-S, A-LASSO and
``ncvreg'' package\cite{ncvreg} for MCP in R. We conduct simulations of model
selection in which the number of simulations is $S=100$. Basically, in
all methods, the candidate values of the regularization parameter is
$20$ points in $[0.01,0.5]$ with log-scale. In LASSO and LASSO-S, we
employ SUREs with $\esigma^2_{\rm CE}$ for model selection. In A-LASSO,
the weight for the penalty term is set to the reciprocal of the absolute
value of the ridge estimator. This is a substitute of the least squares
estimator to avoid a collinearity problem. The ridge parameter in doing
this is selected by $10$-fold cross validation in $10$ points in
$[0.01,10]$ with log-scale. By using this initial estimator, the
regularization parameter of A-LASSO and $\gamma$-parameter (exponent of
weights) are selected by a grid search of $10$-fold cross validation in
which the candidate values for $\gamma$-parameter are
$\{0.5,1.0,2.0\}$. For MCP, the regularization parameter and
$\gamma$-parameter are selected by a grid search of $10$-fold cross
validation, in which the candidate values of $\gamma$-parameter are
$\{2.5,3.0,3.5,4.0\}$. In MCP, the choice of $\gamma$-parameter seems to
largely affect the generalization performance. At each simulation, we
calculate the number of non-zero coefficients and actual risk of a
selected model. The boxplots of risk and the number of non-zero
coefficients of a selected model is depicted in
Fig.\ref{fig:papsim06risk} and Fig.\ref{fig:papsim06df} for $\tau=0.1$
and Fig.\ref{fig:papsim07risk} and Fig.\ref{fig:papsim07df} for
$\tau=0.4$.

In Fig.\ref{fig:papsim06risk} and Fig.\ref{fig:papsim06df}, we can see
that LASSO-S tends to select a sparse model with lower risk in comparing
with LASSO. Especially, selection of a sparse representation of LASSO-S
is notable. This shows that our scaling method surely contributes to
improve model selection property even though it is a simple modification
of LASSO. Therefore, the introduction of scaling really
solves the bias problem of LASSO. LASSO-S is also comparable or superior
to A-LASSO in terms of both of sparseness and risk even though we choose
the hyper-parameters in A-LASSO by cross validation. MCP shows the best
performance in sparseness and risk. This is notable when $n=400$,
relatively large sample case. However, when $n=100$, risk of MCP tends
to be larger than the other methods in some data.

On the other hand, as mentioned above, Fig.\ref{fig:papsim07risk} and
Fig.\ref{fig:papsim07df} show results when components in a target
function are relatively correlated. In this case, we can see that MCP
shows a worse total performance compared to the other methods even when
$n=400$. Contrastly, LASSO-S shows the best performance while LASSO also
shows a good performance. These results tell us that LASSO-S bring us a
stable improvement of LASSO regardless the number of
samples and condition on a target function. Additionally, both of
optimization and model choice of LASSO-S is very simple and fast.

\section{Conclusions and future works}

LASSO is known to be suffered from a bias problem that is caused by
excessive shrinkage. In this article, we considered to improve it by a
simple scaling method. We gave an appropriate empirical scaling value
that expands LASSO estimator and actually moves LASSO estimator close to
the least squares estimator of the post estimation. This is shown to be
especially effective when the regularization parameter is large; i.e. a
sparse representation. Since it can be calculated based of LASSO
estimator, we just run a fast and stable LASSO optimization procedure
such as LARS-LASSO or coordinate descent. We also derived SURE under the
modified LASSO with scaling. This analytic solution for model selection
is also a benefit of the proposed scaling method. As a result, we gave a
fully empirical sparse modeling procedure by a scaling method. In a
simple numerical example, we verified that the proposed scaling method
actually fixes the problem in LASSO and has a stability of model
selection compared to MCP and adaptive LASSO. As a future works, we need
more application results of our scaling method. Although we considered
to assign a single scaling value for all coefficients in this article,
the assignment of coefficient-wise scaling values is expected to improve
a prediction performance. This extension of our scaling method is also a
part of future works.

\section*{Acknowledgements}

This work was supported in part by Japan Society for the Promotion
of Science (JSPS) KAKENHI Grant Number 18K11433.



\end{document}